\documentclass{article} 
\usepackage{iclr2025_conference,times}
\usepackage{amsmath,amsfonts,bm}

\def\eqref#1{equation~\ref{#1}}

\def\1{\bm{1}}

\DeclareMathAlphabet{\mathsfit}{\encodingdefault}{\sfdefault}{m}{sl}
\SetMathAlphabet{\mathsfit}{bold}{\encodingdefault}{\sfdefault}{bx}{n}

\usepackage{hyperref}
\usepackage{booktabs}
\usepackage{xcolor}   
\usepackage{listings}

\usepackage{url}
\usepackage{graphicx} 
\usepackage{subcaption}
\usepackage{amsmath, amssymb, amsthm}
\newtheorem{proposition}{Proposition}

\title{Amortized Reasoning Tree Search: Decoupling Proposal and Decision in Large Language Models}

\author{
Zesheng Hong \\
Information Hub\\
The Hong Kong University of Science and Technology (Guangzhou)\\
No.1 Du Xue Rd, 511453, Guangzhou, China \\
\texttt{zhongal@connect.hkust-gz.edu.cn} \\
\And
Jiadong Yu \\
Information Hub\\
The Hong Kong University of Science and Technology (Guangzhou)\\
No.1 Du Xue Rd, 511453, Guangzhou, China \\
\texttt{jiadongyu@hkust-gz.edu.cn} \\
\AND
Hui Pan \\
Information Hub\\
The Hong Kong University of Science and Technology (Guangzhou)\\
No.1 Du Xue Rd, 511453, Guangzhou, China \\
\texttt{panhui@hkust-gz.edu.cn} \\
}

\iclrfinalcopy 
\begin{document}

\maketitle

\begin{abstract}
Reinforcement Learning with Verifiable Rewards (RLVR) has established itself as the dominant paradigm for instilling rigorous reasoning capabilities in Large Language Models. 
While effective at amplifying dominant behaviors, we identify a critical pathology in this alignment process: the systematic suppression of valid but rare (low-likelihood under base model distribution) reasoning paths.
We theoretically characterize this phenomenon as a \textbf{``Normalization Squeeze,''} where the interplay between mode-seeking policy gradients and finite sampling acts as a \textit{High-Pass Likelihood Filter}, driving the probability of rare correct traces to statistical extinction.
To counteract this collapse without discarding the base model's latent diversity, we propose \textbf{Amortized Reasoning Tree Search (ARTS)}. 
Unlike standard approaches that force \textit{internalization} via parameter updates, ARTS prioritizes \textit{deliberation} by decoupling generation from verification.
We introduce a \textbf{Flow Matching} objective that repurposes the verifier to estimate the conservation of probability flow, enabling robust navigation through sparse, high-entropy search spaces where traditional discriminative objectives fail.
Extensive experiments on the MATH-500 benchmark demonstrate that ARTS achieves a performance of 74.6\% (BoN@16), effectively matching fully fine-tuned policies (74.7\%) without modifying the generative backbone. 
Crucially, on the long-tail subset where coupled RL optimization collapses to 0\% pass@k, ARTS uniquely recovers significant performance, suggesting that disentangling verification from generation offers a more robust pathway for solving complex reasoning tasks.
\end{abstract}

\section{Introduction}
\label{sec:intro}

Reinforcement Learning with Verifiable Rewards (RLVR) has emerged as a cornerstone for advancing reasoning in Large Language Models (LLMs). Systems such as DeepSeek-R1~\cite{deepseek2025r1} and OpenAI o1~\cite{openai2024o1} demonstrate that optimizing for outcome correctness via Policy Gradient (e.g., GRPO, PPO) effectively internalizes reasoning capabilities, significantly boosting performance on standard benchmarks.
However, recent diagnostic studies, such as the work by~\cite{yue2025does}, reveal a concerning trend: while RLVR improves average performance by biasing the model towards dominant solutions, it often narrows the reasoning coverage, failing to expand—and sometimes even reducing—the model's intrinsic capacity to solve diverse problems.

In this work, we aim to uncover the mechanism behind this phenomenon and explore effective countermeasures. 
We argue that the prevailing "internalization" paradigm acts as a \textbf{High-Pass Likelihood Filter}, systematically extinguishing valid but rare reasoning patterns that reside in the long tail of the base model's distribution. As illustrated in Figure ~\ref{fig:normalization_squeeze_a}, this optimization dynamic creates an \textbf{"Extinction Zone"}, where the probability mass of valid sparse solutions is suppressed to near zero by the mode-seeking policy.

To address this, we revisit the classical \textbf{"Generate-then-Verify"} paradigm~\cite{cobbe2021training, lightman2023letsverify}. 
While separating verification from generation is not new, previous approaches typically rely on \textbf{discriminative objectives}—either pointwise regression (e.g., MSE in PRMs) or pairwise ranking (e.g., Bradley-Terry in RMs). We identify that both paradigms struggle in sparse signal regimes, as they inherently penalize exploration paths that do not yield immediate high confidence or form valid comparison pairs.
We propose to view verification through the lens of \textbf{Flow Matching}~\cite{bengio2021flow}, investigating whether a flow-calibrated objective can better preserve the sensitivity to rare correct paths that standard discriminative objectives ignore.

Our contributions are threefold, spanning theoretical analysis, methodological investigation, and empirical validation:

\begin{itemize}
    \item \textbf{Theoretical Mechanism of Extinction:} We provide a formal analysis of the optimization dynamics in RLVR (Section~\ref{sec:theoretical_analysis}). We derive how the interplay between low-temperature sampling and finite rollouts induces a Normalization Squeeze, where the partition function shift causes the probability of unsampled valid traces to decay exponentially. This offers a theoretical explanation for the "coverage narrowing" phenomenon observed in recent empirical studies~\cite{yue2025does}.
    
    \item \textbf{Revisiting Verification with Flow Matching:} We introduce \textbf{Amortized Reasoning Tree Search (ARTS)}, a framework that repurposes the verifier as a flow estimator. Unlike standard PRMs that regress to local step quality, ARTS leverages the conservation of probability flow to capture the aggregated value of future trajectories. This allows us to study whether modeling global uncertainty offers superior robustness in high-entropy search spaces compared to traditional discriminative baselines.
    
    \item \textbf{Empirical Analysis on the Long Tail:} We conduct controlled experiments on MATH-500 to evaluate the trade-offs between training and search. 
Crucially, on the "Hardest Subset" where the RL-tuned policy fails completely (0\% pass@16), we observe that standard verifiers (both Pointwise PRM and Pairwise RM) also struggle, yielding results often \textit{worse than random sampling} (1.7\% vs 2.7\%).
In contrast, ARTS is the \textbf{only method to maintain a positive gain} (6.9\%, closing 25.8\% of the gap), confirming its unique structural advantage in preserving valid reasoning signals that are statistically invisible to other paradigms.
\end{itemize}

This work does not claim to solve the reasoning challenge; rather, it offers a rigorous study of the \textit{data dynamics} in alignment. 
Our findings suggest that for hard, long-tail problems, deferring computation to inference-time search with a flow-based verifier offers a more robust alternative to forcing internalization via reinforcement learning.

\section{Related Work}
\label{sec:related_work}

\subsection{Reinforcement Learning with Verifiable Rewards (RLVR)}
RLVR has established itself as the standard for aligning LLMs in domains with ground-truth feedback. 
Iterative self-improvement methods like \textbf{STaR}~\cite{zelikman2022star} and \textbf{ReST}~\cite{gulcehre2023reinforced} refine reasoning traces through repeated sampling and filtering. 
More recently, state-of-the-art systems such as \textbf{DeepSeek-R1}~\cite{deepseek2025r1} and \textbf{DeepSeekMath}~\cite{shao2024deepseekmath} utilize \textbf{Group Relative Policy Optimization (GRPO)} to stabilize training without value networks. While preference-optimization methods like \textbf{DPO}\cite{rafailov2023direct} and \textbf{KTO}\cite{ethayarajh2024kto} demonstrate strong performance in general alignment, RLVR remains the preferred approach for reasoning tasks requiring verifiable correctness. Despite their success in \textit{internalizing} capabilities, recent diagnostic studies by ~\cite{yue2025does} reveal a critical side effect: RL training acts as a mode-seeking process that often reduces reasoning coverage on long-tail problems. 
Our work builds on this diagnosis by identifying the theoretical mechanism—the "Normalization Squeeze"—and proposing a parameter-efficient search framework that avoids this coverage collapse.

\subsection{Inference-Time Search and Verification}
Structuring reasoning as a search process ("System 2") is a promising direction for handling complex tasks, exemplified by \textbf{Chain-of-Thought}~\cite{wei2022chain} and \textbf{Tree of Thoughts}~\cite{yao2024tree}. This paradigm has evolved significantly: from simple majority voting in \textbf{Self-Consistency}\cite{wang2023selfconsistency}, to assessing intermediate steps in \textbf{WizardMath}
\cite{luo2024wizardmath}, and exploring complex graph topologies in \textbf{RAP}\cite{hao2023reasoning}, \textbf{LATS}\cite{zhou2024language}, and \textbf{Graph of Thoughts}~\cite{besta2024graph}.
The efficacy of these methods hinges on the quality of the verifier. 
Standard approaches train \textbf{Process Reward Models (PRMs)} using pointwise regression (e.g., \textit{Math Shepherd}~\cite{lightman2023letsverify}) or pairwise ranking (e.g., \cite{cobbe2021training}).
However, we find that these \textbf{discriminative objectives} struggle in sparse signal regimes (the "needle in a haystack" scenario), as they inherently penalize exploration paths that do not yield immediate high confidence. 
In contrast, \textbf{ARTS} repurposes the verifier as a \textbf{flow estimator}, leveraging the flow conservation property to aggregate global probability mass, thereby providing a more robust signal for navigating high-entropy search spaces.

\subsection{Generative Flow Networks (GFlowNets)}
\textbf{Generative Flow Networks (GFlowNets)}~\cite{bengio2021flow} provide a probabilistic framework for sampling diverse candidates proportional to a reward function. Advanced training objectives, such as \textbf{Trajectory Balance}~\cite{malkin2022trajectory}, have further stabilized the credit assignment in these generative flow networks.
Recent works have applied GFlowNet principles to language modeling, primarily to fine-tune the generative policy for diverse sequence generation~\cite{yu2024flow,zhu2025flowrl}.
Distinct from these approaches that focus on reshaping the \textit{generator}, we employ the flow matching objective solely to train a \textbf{verifier} over a frozen, high-entropy proposer. 
This decoupled design is specifically tailored to preserve the base model's latent diversity ("Resurrecting Extinguished Knowledge") rather than forcing the policy to collapse onto a single mode.

\section{Theoretical Analysis: The Dynamics of Reasoning Extinction}
\label{sec:theoretical_analysis}

In this section, we analyze the optimization dynamics of Reinforcement Learning with Verifiable Rewards (RLVR). 
While \textbf{Policy Gradient Theorem} \cite{sutton1999policy} guarantees convergence to optimal policies under infinite sampling, we demonstrate that the practical constraints of \emph{low-temperature sampling} and \emph{finite rollouts} introduce a systemic selection bias.
Our analysis reveals a critical failure mode: valid reasoning patterns that fall below a \textbf{Visibility Barrier} are subject to exponential extinction, regardless of their correctness.

\begin{figure}[t]
    \centering
    
    \begin{subfigure}[t]{0.48\linewidth}
        \centering
        \includegraphics[width=\linewidth]{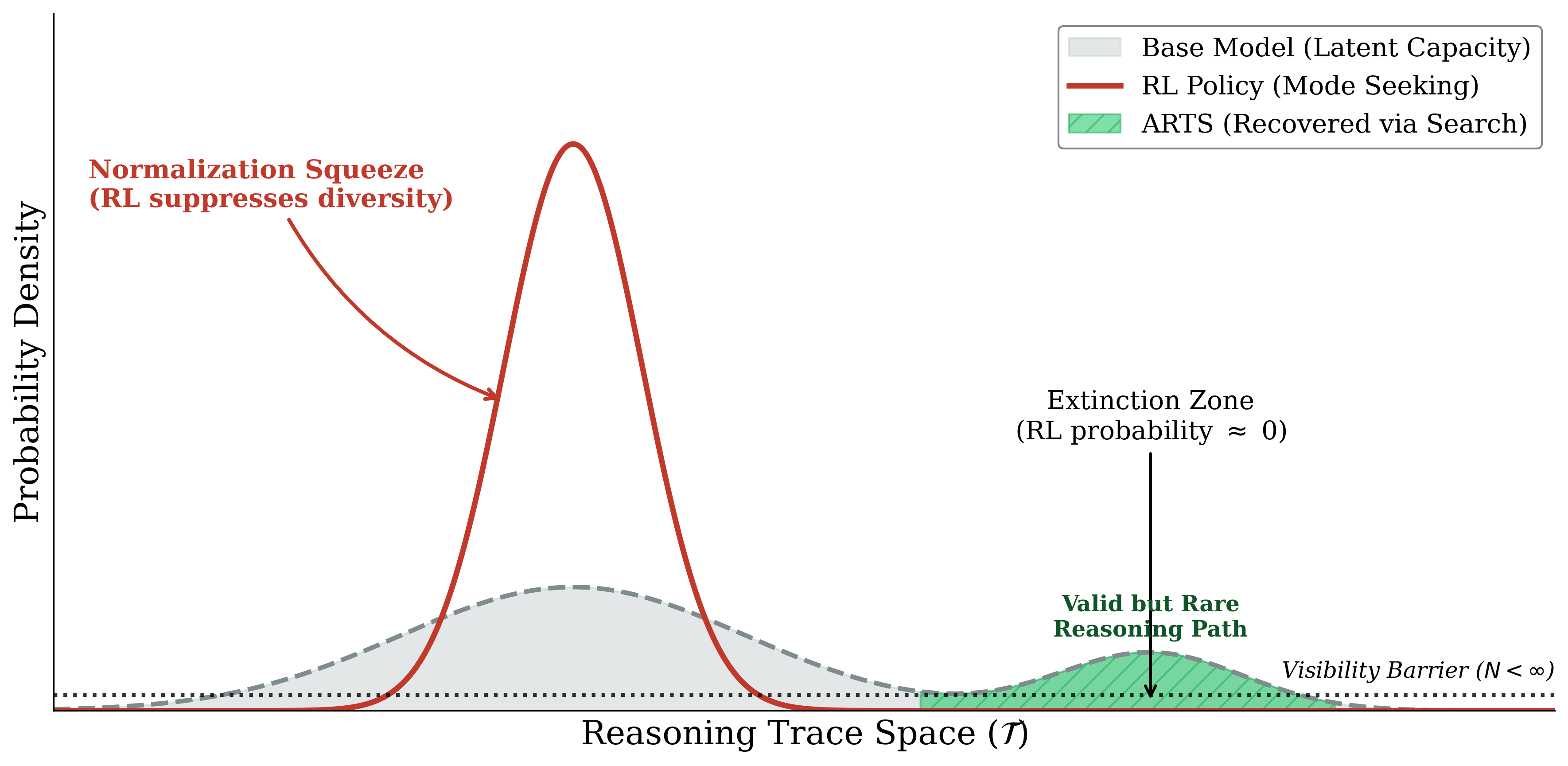}
        \caption{}
        \label{fig:normalization_squeeze_a}
    \end{subfigure}
    \hfill
    \begin{subfigure}[t]{0.48\linewidth}
        \centering
        \includegraphics[width=\linewidth]{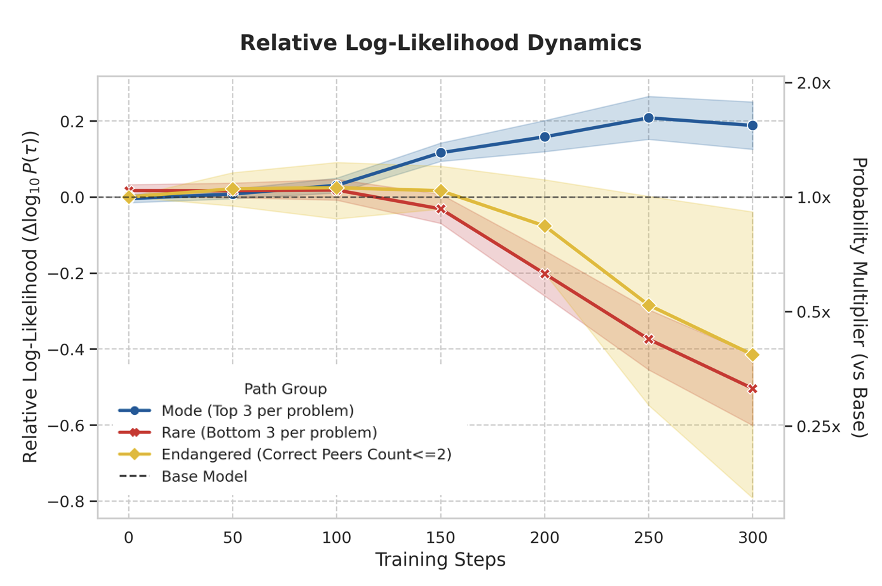}
        \caption{}
        \label{fig:normalization_squeeze_b}
    \end{subfigure}

    \caption{
\textbf{(a) Visualizing the ``Normalization Squeeze''.} 
Base Model (Gray): Covers both common and rare valid reasoning paths. 
RL Policy (Red): Acts as a \textit{High-Pass Filter}, amplifying the dominant mode while suppressing rare traces to near zero (\textit{Extinction Zone}). 
\textbf{(b) Visualization of Capability Extinction.} 
We track the relative log-likelihood evolution ($\Delta \log_{10} P(\tau)$) of valid reasoning traces over 300 GRPO steps.
While dominant Mode Traces (Blue) are consistently reinforced, Rare (Red) and Endangered (Yellow) traces suffer systematic decay despite being fully correct.
This confirms that standard RLVR acts as a High-Pass Likelihood Filter, preserving only the high-redundancy patterns while purging the model's capability to solve hard, long-tail problems.
ARTS (Green): Decouples generation from verification, using inference-time search to recover these extinguished paths and unlock latent long-tail capacity.
}
    \label{fig:normalization_squeeze}
\end{figure}

Let $\mathcal{T}$ denote the discrete space of reasoning traces. 
Standard RLVR algorithms (e.g., GRPO~\cite{shao2024deepseekmath}, PPO~\cite{schulman2017proximal}) aim to maximize a reward objective subject to a KL constraint against a reference policy $\pi_{\text{ref}}$. 
The theoretical update at iteration $k$ is often formulated as:
\begin{equation}
    \theta_{k+1} = \mathop{\arg\max}_{\theta} \mathbb{E}_{\tau \sim \pi_{\theta_k}} \left[ R(\tau) \right] - \beta \mathrm{KL}(\pi_\theta \| \pi_{\text{ref}}).
\end{equation}
However, practical implementations approximate the expectation using a finite set of samples $\mathcal{S}_k = \{\tau_1, \dots, \tau_N\}$ drawn from a proposal distribution $Q_k$. 
This induces a \textbf{Sampling-Bias Gap}: the gradient update is computed exclusively on the \emph{sparse empirical support} $\mathcal{S}_k$, creating a disconnect between the theoretical objective and the realized optimization trajectory.

\paragraph{Assumption (Function Space Dynamics).}
To isolate the effects of selection bias from generalization noise, we analyze the dynamics in the function space of reasoning traces. We assume that distinct reasoning paths are approximately orthogonal in the representation space, such that updating the logits of sampled traces does not arbitrarily boost unsampled traces via parameter sharing.

\paragraph{Practical Constraint: Low-Temperature Filtering.}
While unbiased exploration ($T=1$) is theoretically ideal, state-of-the-art reasoning methods typically operate in a \emph{low-temperature regime} to ensure logical coherence and reduce hallucinations \cite{zelikman2022star, cobbe2021training}.
Leading reasoning frameworks, including DeepSeek-R1 \cite{deepseek2025r1} and ReST \cite{gulcehre2023reinforced}, explicitly utilize sharpened sampling or rejection mechanisms to filter out low-quality traces:
\begin{equation}
\label{eq:sampling_dist}
    Q_k(\tau) \propto \pi_k(\tau)^{1/T}, \quad \text{with } T \in [0.6, 0.8].
\end{equation}
Even when $T=1$, the constraint of finite rollouts ($N \ll |\mathcal{T}|$) imposes an implicit \textbf{High-Pass Likelihood Filter}. 
A reasoning trace with $\pi(\tau) \ll 1/N$ is statistically invisible to the optimization process, effectively creating a truncation equivalent to $T_{\text{eff}} < 1$.

\subsection{Mechanism: The Normalization Squeeze}

We now derive the closed-form evolution of reasoning probabilities under this sparse update regime.
We model the update dynamics as a \emph{Partial Boltzmann Evolution}, where the probability mass redistribution is constrained by the limited support of sampled traces.

\begin{proposition}[Probability Decay under Sparse Updates]
\label{prop:decay_law}

Let $\mathcal{S}_k$ denote the set of sampled traces at iteration $k$. 
For any valid reasoning trace $\tau$ that is not sampled 
(i.e., $\tau \notin \mathcal{S}_k$), its probability under the updated policy 
$\pi_{k+1}$ satisfies:

\begin{equation}
\label{eq:closed_form_decay}
    \pi_{k+1}(\tau) 
    = \pi_k(\tau) 
    \left( \frac{1}{1 + \alpha_k} \right),
\end{equation}

where $\alpha_k > 0$ represents the partition function shift induced by the rewarded set:

\begin{equation}
    \alpha_k 
    = \sum_{\tau' \in \mathcal{S}_k} 
    \pi_k(\tau') 
    \left[
        \exp\!\left( \frac{R(\tau')}{\beta} \right) - 1
    \right].
\end{equation}

\end{proposition}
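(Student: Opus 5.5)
The plan is to read the update as the closed-form solution of a KL-regularized objective restricted to the empirical support $\mathcal{S}_k$, and then see what renormalization does to traces outside that support. Under the Function Space Dynamics assumption, the logits of unsampled traces are not moved by the update; only the sampled traces receive reward-driven changes.

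\textbf{Step 1: the update rule.} The KL-regularized maximization (with $\pi_k$ playing the role of the local reference, as in trust-region or iterative schemes) has the standard Gibbs solution
\begin{equation*}
\pi_{k+1}(\tau) \propto \pi_k(\tau)\exp\!\left(\hat R_k(\tau)/\beta\right).
\end{equation*}
Here $\hat R_k$ is the empirical reward signal. It equals $R(\tau)$ on $\mathcal{S}_k$ and $0$ off $\mathcal{S}_k$, because no gradient information reaches unsampled traces. This is the Partial Boltzmann Evolution.

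\textbf{Step 2: the normalizer.} The partition function is
\begin{equation*}
Z_k=\sum_{\tau\notin\mathcal{S}_k}\pi_k(\tau)+\sum_{\tau'\in\mathcal{S}_k}\pi_k(\tau')e^{R(\tau')/\beta}.
\end{equation*}
Using $\sum_{\tau}\pi_k(\tau)=1$, this rewrites as
\begin{equation*}
Z_k=1+\sum_{\tau'\in\mathcal{S}_k}\pi_k(\tau')\bigl(e^{R(\tau')/\beta}-1\bigr)=1+\alpha_k.
\end{equation*}
For $\tau\notin\mathcal{S}_k$ the unnormalized weight is exactly $\pi_k(\tau)$. Dividing by $Z_k$ then gives \eqref{eq:closed_form_decay}.

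\textbf{Step 3: positivity of $\alpha_k$.} This step needs the verifiable rewards to be nonnegative, $R\in\{0,1\}$. Each summand of $\alpha_k$ is then nonnegative, and it is strictly positive whenever $\mathcal{S}_k$ contains at least one rewarded trace with $\pi_k(\tau')>0$. I would state this as the implicit hypothesis that the batch carries a nonzero learning signal. If no sampled trace is rewarded, then $\alpha_k=0$ and the policy is unchanged.

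\textbf{Main obstacle.} The delicate point is justifying Step 1, namely that $\pi_{k+1}$ is an exact tilt of $\pi_k$ with zero tilt off-support. The stated objective has $\pi_{\text{ref}}$ in the KL term and takes the expectation under $\pi_{\theta_k}$, so this is not literally its solution. I would handle this by making the modeling choices explicit:
\begin{itemize}
\item the proximal reference is $\pi_k$;
\item the expectation is replaced by its empirical, support-restricted surrogate;
\item the orthogonality assumption is used to argue that the unnormalized mass (equivalently, the logits) of unsampled traces is unchanged, so that renormalization is their only source of change.
\end{itemize}
Once these are granted, the rest is the algebra in Step 2.
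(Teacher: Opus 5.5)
Your proposal is correct and follows the paper's argument. The paper posits the additive logit update $f_{k+1}=f_k+R/\beta$ on $\mathcal{S}_k$ with logits frozen elsewhere, which is the same as your tilt $\pi_{k+1}\propto\pi_k e^{\hat R_k/\beta}$, and your normalizer $1+\alpha_k$ is exactly the paper's ratio $Z_{k+1}/Z_k$. Your proximal-KL motivation for the update, and your explicit requirement that some sampled trace have positive reward (the paper simply assumes $R>0$ to get $\alpha_k>0$), are minor refinements rather than a different route.
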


\begin{proof}
See Appendix~\ref{app:proof_thm1} for the full derivation based on partition function shifts.
\end{proof}

\paragraph{Correctness $\neq$ Survival.}
Eq.~\eqref{eq:closed_form_decay} establishes a fundamental insight often overlooked in RLVR: \textbf{validity is not a sufficient condition for reinforcement}.
The survival of a reasoning trace depends on two factors:
\begin{enumerate}
    \item \textbf{Visibility:} Whether it can cross the sampling threshold to enter $\mathcal{S}_k$.
    \item \textbf{Competition:} Whether it can withstand the ``Normalization Squeeze'' ($\gamma_k$)—a passive suppression force exerted by the strictly increasing partition function of the dominant modes.
\end{enumerate}
For rare correct traces, the feedback loop is strictly contractive: they are missed by sampling ($T<1$) $\to$ their logits are frozen $\to$ their probability decays due to normalization $\to$ they become even less likely to be sampled. This leads to the irreversible extinction of long-tail reasoning capabilities.

\subsection{Empirical Validation: The Extinction of Hard Capabilities}
\label{sec:empirical_validation}

To rigorously test our theoretical predictions, we conducted a large-scale controlled experiment tracking the lifecycle of reasoning traces during standard GRPO training.
Unlike previous studies that focus only on general reward curves, we specifically isolate the fate of ``Endangered'' reasoning patterns—valid solutions to hard problems that are statistically fragile.

\paragraph{Experimental Setup.}
We utilize \textbf{Qwen-2.5-7B}~\cite{qwen2024technical} as the base policy and evaluate on the \textbf{MATH-500} benchmark~\cite{lightman2023letsverify}, a subset of the MATH dataset specifically curated to test rigorous reasoning.
To ensure coverage of the natural reasoning distribution, we first construct a fixed offline pool by sampling $N=64$ responses per problem from the base model at temperature $T=1.0$.
We then perform 300 steps of standard GRPO training using this fixed pool.
Crucially, we categorize all \emph{correct} reasoning traces into three distinct groups based on their redundancy and initial likelihood:

\begin{enumerate}
    \item \textbf{Mode Traces (Blue):} The top-2 highest likelihood correct traces for each problem. These represent the model's dominant reasoning habits.
    \item \textbf{Rare Traces (Red):} The bottom-2 lowest likelihood correct traces for each problem. These represent valid but "long-tail" reasoning strategies.
    \item \textbf{Endangered Traces (Yellow):} Traces from problems that have \textbf{$\le 2$ correct paths total} in the pool. These represent the \textbf{sole surviving valid reasoning paths} for the hardest problems, serving as a proxy for the model's \emph{fragile knowledge boundary}.
\end{enumerate}

\paragraph{Metric Definition.}
Throughout training, we track the \textbf{relative log-likelihood} assigned to each reasoning trace $\tau$ by the evolving policy $\pi_t$, measured relative to the base model $\pi_0$:
\begin{equation}
    \Delta \log_{10} \mathcal{L}(\tau)
    =
    \log_{10} \pi_t(\tau)
    -
    \log_{10} \pi_0(\tau)
    =
    \log_{10}\!\left(
    \frac{\pi_t(\tau)}{\pi_0(\tau)}
    \right).
\end{equation}
A value of $+1.0$ indicates a $10\times$ increase in probability mass, while $-1.0$ indicates a $10\times$ suppression. This metric isolates the optimization dynamics from the intrinsic complexity of the trace.

\paragraph{Results: The Collapse of Fragile Knowledge.}
Figure~\ref{fig:normalization_squeeze_b} visualizes these dynamics over 300 training steps, revealing a distinct phase transition:

\begin{itemize}
    \item \textbf{The Rich-Get-Richer Effect (Mode):} 
    Dominant traces (Blue), defined as the top-2 highest likelihood paths, consistently gain probability mass. They reach a relative log-likelihood of $\Delta \approx +0.2$ (representing a $1.6\times$ probability boost), confirming that RLVR efficiently reinforces frequent patterns.
    
    \item \textbf{The Extinction of Diversity (Rare):} 
    In contrast, Rare traces (Red)—the bottom-2 lowest likelihood paths—suffer a catastrophic decline. Despite being fully correct, their log-likelihood drops by $\Delta \approx -0.5$ (shrinking to $\sim 0.3\times$ original probability). This validates Proposition~\ref{prop:decay_law}: without sufficient visibility, validity alone cannot protect a trace from the normalization squeeze.

    \item \textbf{Capability Collapse (Endangered):} 
    Most critically, the Endangered traces (Yellow)—representing problems with sparse solutions ($\le 2$ total found)—mirror the extinction trajectory of the Rare group, crashing to $-0.4$. 
    Since these traces constitute the \emph{sole} valid reasoning capability for these hard problems, their suppression implies a net \textbf{loss of problem-solving capability}. 
    The model effectively "forgets" how to solve hard problems because their sparse solutions fail to survive the \textbf{High-Pass Likelihood Filter} imposed by finite sampling.
\end{itemize}

This experiment provides empirical proof that standard RLVR training acts as a \textbf{High-Pass Filter}, systematically purging reasoning capabilities that lack initial redundancy.

\section{Methodology}
\label{sec:method}

To counteract the boundary contraction issue identified in Section~\ref{sec:theoretical_analysis}, we propose \textbf{Amortized Reasoning Tree Search (ARTS)}. 
Unlike coupled RLVR methods that collapse the search space to the base model's priors, ARTS structurally decouples the \textit{Proposer} (System 1) from the \textit{Verifier} (System 2)~\cite{kahneman2011thinking, yao2024tree}. This allows the Verifier to learn a global value function that transcends the myopic biases of the generative policy. This section details the ARTS framework, the \textbf{Sparse Deep Fork} data construction, and formally derives our training objective as a \textbf{Flow Matching Lower Bound} grounded in Generative Flow Network (GFlowNet) theory.

\subsection{Problem Setup}
\label{sec:problem_setup}

We formalize multi-step reasoning as a sequential decision process on a Directed Acyclic Graph (DAG) $\mathcal{G}$. 
A state $s_t = (x, a_1, \dots, a_t)$ represents a partial reasoning chain.
The environment provides feedback only at terminal states $s_T$. 
To mitigate the \emph{vanishing gradient problem} inherent in sparse reward environments and ensure \emph{exploration continuity}, we adopt a \textbf{Log-Flow Smoothing} reward scheme:
\begin{equation}
\label{eq:soft_reward}
R(s_T) = 
\begin{cases}
    10, & \text{if } s_T \text{ is correct}, \\
    0.1, & \text{otherwise}.
\end{cases}
\end{equation}
This contrastive design assigns high flow magnitude to correct reasoning ($100\times$) while maintaining a non-zero \textbf{background flow} for incorrect paths, preventing the flow manifold from collapsing to singularity ($\log 0 \to -\infty$).

\subsection{Amortized Reasoning Tree Search (ARTS)}
\label{sec:ARTS}

The goal of ARTS is to train a Verifier $V_\theta(s)$ that guides search at inference time to recover correct paths invisible to standard sampling.
We utilize the frozen base model $\pi_0$ as a \textit{Proposer}. 
While $\pi_0$ may assign low probability to a correct trajectory $\tau^*$, the probability of discovering $\tau^*$ via a stochastic tree search with width $K$ expands geometrically as $K^H \cdot \pi_0(\tau^*)$. 
We term this phenomenon \textbf{Search-Based Probability Amplification}: theoretically, the probability of discovering a rare trajectory $\tau^*$ expands geometrically as $1 - (1 - \pi_0(\tau^*))^K \approx K \cdot \pi_0(\tau^*)$ in the small probability limit.

\begin{figure}[t]
    \centering
    \includegraphics[width=\linewidth]{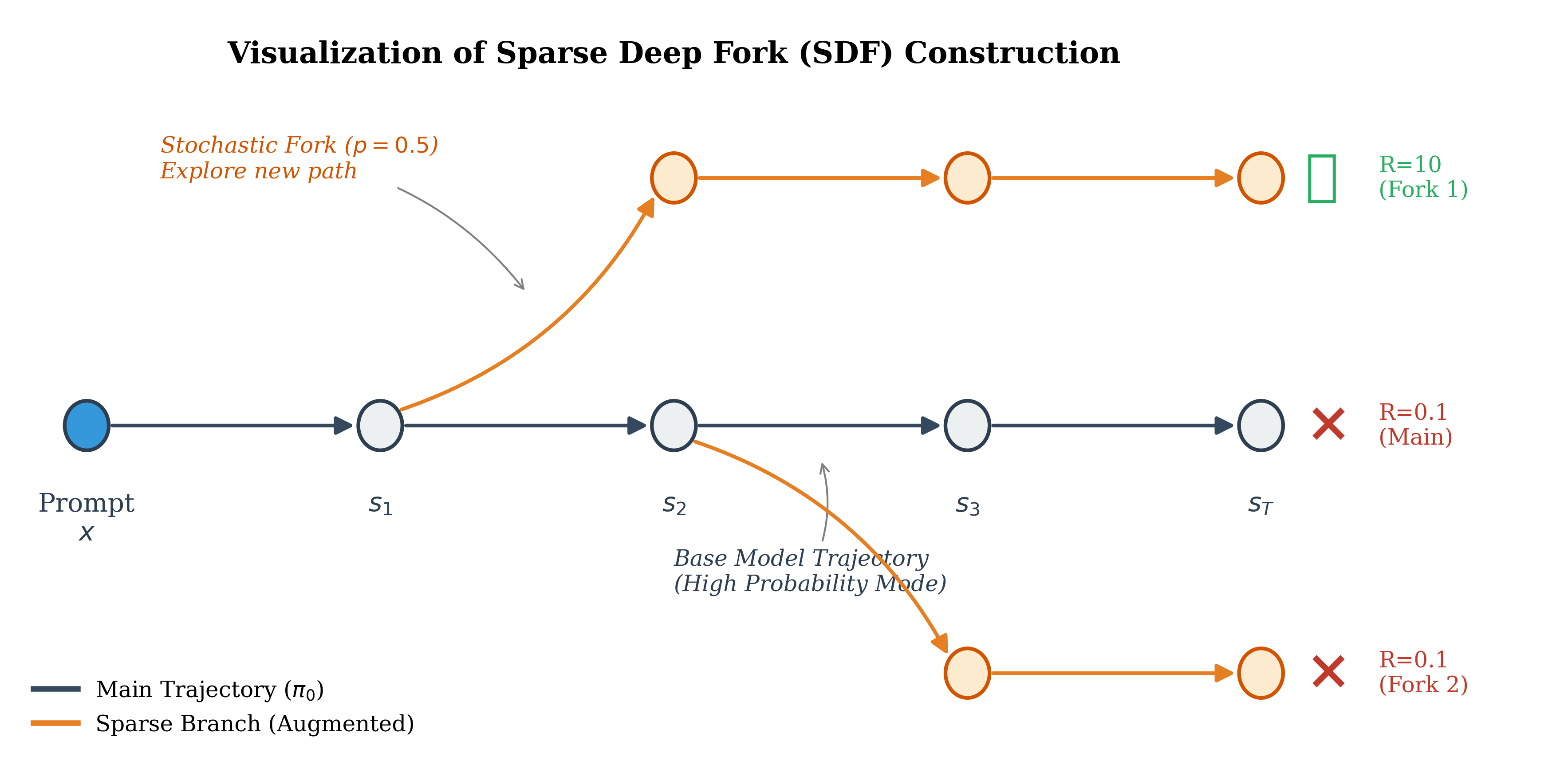}
    \caption{
    \textbf{Visualizing Sparse Deep Fork (SDF).} 
    Instead of exhaustive tree search, SDF constructs a cost-effective DAG by stochastically branching (Orange) from the base model's main trajectory (Blue). 
    This structure efficiently exposes the verifier to local decision boundaries and alternative outcomes—such as discovering a correct path ($R=10$) diverging from an incorrect one—without the exponential cost of full expansion.
}
\label{fig:sdf_diagram}
\end{figure}

\subsection{Data Construction: Sparse Deep Fork}
\label{sec:data_construction}

Training a flow function requires observing the branching topology of the reasoning space. However, constructing a full tree is computationally prohibitive.
We propose \textbf{Sparse Deep Fork (SDF)}, a topological approximation strategy illustrated in Figure~\ref{fig:sdf_diagram}.

The graph construction proceeds as follows:
\begin{enumerate}
    \item \textbf{Main Trajectory (Blue):} For a given problem $x$, we sample $N$ independent reasoning trajectories from the base policy $\pi_0$. These serve as the "backbone" of our topological map.
    
    \item \textbf{Stochastic Branching (Orange):} We iterate through intermediate nodes $s_{t}$ in the main trajectory. With probability $p=0.5$, we select $s_{t}$ as a pivot point and perform \textbf{one additional rollout} (fork) using $\pi_0$. 
\end{enumerate}

\textbf{Efficiency Analysis.} This generates a sparse DAG $\mathcal{G}_{\text{sdf}}$ where specific nodes have two observed children (the original path and the augmented branch). 
As shown in Figure~\ref{fig:sdf_diagram}, this structure allows the verifier to observe \textbf{counterfactual outcomes}—e.g., a correct branch ($R=10$) diverging from an incorrect main path ($R=0.1$)—using only $\mathcal{O}(1.5N)$ compute. 
This efficiently exposes the local decision boundaries necessary for training a discriminative flow function.

\subsection{Reasoning Flow Matching (RFM)}
\label{sec:RFM}

Our training objective is grounded in the theory of \textbf{Generative Flow Networks (GFlowNets)} \cite{bengio2021flow}, which learn a flow function $F(s)$ such that the flow into any state equals the flow out of it.
However, standard GFlowNet training assumes access to either the full graph or an on-policy exploration policy. In our sparse, offline setting, we adapt the objective to learn from the \textbf{observed support}.

\paragraph{The Partial Consistency Objective.}
We parameterize the verifier $F_{\theta}(s): \mathcal{S} \to \mathbb{R}^+$ in log-space. 
For any non-terminal state $s_t \in \mathcal{G}_{\text{sdf}}$ with the set of \emph{observed} children $\mathcal{C}_{obs}(s_t)$, we minimize the local flow mismatch:
\begin{equation}
\label{eq:fm_consistency}
\mathcal{L}_{\mathrm{RFM}}(\theta) = \mathbb{E}_{s_t \sim \mathcal{G}_{\text{sdf}}} \left[ \left( \log F_{\theta}(s_t) - \log \sum_{s' \in \mathcal{C}_{obs}(s_t)} F_{\theta}(s') \right)^2 \right] + \lambda \mathcal{L}_{\text{Leaf}},
\end{equation}
where $\mathcal{L}_{\text{Leaf}}$ regresses the terminal flow to the reward $R(s_T)$.

\paragraph{Analysis: Learning a Ranking-Preserving Lower Bound.}
A critical theoretical question arises: \emph{Does minimizing Eq.~\ref{eq:fm_consistency} on a sparse graph recover the true flow values?}
Formally, the true flow consistency equation requires summing over \emph{all} possible children $\mathcal{C}_{all}(s_t)$:
\begin{equation}
    F^*(s_t) = \sum_{s' \in \mathcal{C}_{all}(s_t)} F^*(s').
\end{equation}
Since our observed set is a subset $\mathcal{C}_{obs} \subset \mathcal{C}_{all}$, the training target is inherently an underestimation:
\begin{equation}
    \text{Target}(s_t) = \sum_{s' \in \mathcal{C}_{obs}} F(s') \leq \sum_{s' \in \mathcal{C}_{all}} F(s') = F^*(s_t).
\end{equation}
Consequently, minimizing the local mismatch on $\mathcal{C}_{obs}$ enforces $F_\theta$ to converge to a \textbf{Conservative Flow Estimate}:
\begin{equation}
    F_\theta(s_t) \to \sum_{s' \in \mathcal{C}_{obs}} F(s') \le \sum_{s' \in \mathcal{C}_{all}} F^*(s').
\end{equation}
This effectively creates a \textbf{Ranking-Preserving Lower Bound} where the flow value represents the \emph{confirmed} probability mass of success discovered so far.

\paragraph{Sufficiency for ARTS.}
While $F_\theta$ underestimates the exact probability, we argue that this lower bound is \textbf{sufficient and effective} for the specific task of ranking reasoning paths in ARTS for three reasons:

\begin{enumerate}
    \item \textbf{Dominance of Correct Flows:} Due to our reward design ($10$ vs $0.1$), the magnitude of flow is dominated by the presence of correct paths. Even if we miss $90\%$ of the children, observing a \emph{single} correct child contributes $+10$ to the sum, whereas observing ten incorrect children contributes only $+1$. Thus, the lower bound $F_\theta(s)$ remains high if and only if at least one successful path is discovered.
    
    \item \textbf{Monotonicity Preservation:} The flow matching objective enforces $F(s_t) \approx \sum F(s_{t+1})$. Since flows are non-negative, this implies $F(s_t) \ge F(s_{t+1})$ (for single-child chains). The learned function preserves the monotonicity of the value landscape, ensuring that the gradient points towards the sources of reward.
    
    \item \textbf{Ranking Consistency:} For inference, we only require the correct ranking $F(s_{\text{good}}) > F(s_{\text{bad}})$. 
    Consider a state $s_{\text{good}}$ that leads to a correct answer. Even with sparse sampling, there is a probability that the correct path is included in $\mathcal{C}_{obs}$, pushing $F(s_{\text{good}}) \to 10$.
    Conversely, for a dead-end state $s_{\text{bad}}$, \emph{all} possible children lead to $0.1$. No matter how many children we miss, the sum cannot exceed the background noise level.
    Therefore, the \textbf{contrastive margin} between valid and invalid states is topologically robust: sparse sampling may underestimate the \emph{magnitude} of a correct node, but it strictly bounds the invalid nodes to the noise floor, preserving the necessary gradient for search guidance.
\end{enumerate}

\section{Experiments}
\label{sec:experiments}

We evaluate ARTS on the \textbf{MATH-500} benchmark,\footnote{All data utilized for experiments are publicly available} comparing our decoupled search framework against standard baselines in a controlled experimental setting. We focus our analysis on the MATH benchmark~\cite{hendrycks2021measuring} as it serves as a standard testbed for measuring multi-step reasoning capabilities.
\begin{table*}[htbp]
\centering
\caption{\textbf{Main Results on MATH-500: Efficiency and Scaling.} 
\textbf{The "Free Lunch" of Inference Search:} 
While GRPO achieves 74.7\% via expensive fine-tuning, ARTS matches this performance (\textbf{74.6\%}) using a frozen base model and a lightweight verifier.
\textbf{Scaling Dynamics:} 
We observe a distinct \textit{crossover}: Pointwise PRM dominates at low budgets ($N \le 8$), acting as a "precision filter." 
However, ARTS exhibits stronger scaling laws, overtaking PRM at $N=16$. 
This confirms that Flow Matching preserves the \textit{diversity} required to recall valid solutions from the long tail, whereas discriminative verifiers saturate early.}
\label{tab:main_results}
\resizebox{\textwidth}{!}{%
\begin{tabular}{l|c|c|c|cccc|c}
\toprule
& & \textbf{Training} & \textbf{Generation} & \multicolumn{4}{c|}{\textbf{Inference-Time Search (Best-of-N)}} & \\
\textbf{Method} & \textbf{Backbone} & \textbf{Params} & \textbf{Pass@1} & \textbf{BoN@2} & \textbf{BoN@4} & \textbf{BoN@8} & \textbf{BoN@16} & \textbf{Gap Closed} \\
\midrule
Qwen-2.5-7B (Base) & 7B & - & 57.7\% & 63.6\% & 68.2\% & 71.5\% & 73.9\% & - \\
SimpleRL-Zoo (GRPO) & 7B & 100\% & \textbf{74.7\%} & - & - & - & - & - \\
\midrule
Base + Pairwise RM & 1.5B & $<$0.3\% & 57.7\% & 63.6\% & 66.2\% & 67.0\% & 67.4\% & 31.3\% \\
Base + Pointwise PRM & 1.5B & $<$0.3\% & 57.7\% & \textbf{67.1\%} & \textbf{71.6\%} & \textbf{73.5\%} & 73.8\% & 52.1\% \\
\textbf{Base + ARTS (Ours)} & 1.5B & $<$0.3\% & 57.7\% & 66.0\% & 70.1\% & 72.2\% & \textbf{74.6\%} & \textbf{54.7\%} \\
\bottomrule
\end{tabular}%
}
\begin{flushleft}
\small
\textbf{Note:} Base Model BoN scores are calculated using random sampling (Oracle Upper Bound for search). \textbf{Gap Closed} is based on BoN@16.
\end{flushleft}
\end{table*}
\subsection{Experimental Setup}

\paragraph{Models and Architecture.}
\begin{itemize}
    \item \textbf{Proposer (Base):} We employ the frozen \textit{Qwen-2.5-7B} \cite{qwen2024technical} as the reasoning generator. We deliberately use the base model to preserve maximum entropy and search coverage.
    
    \item \textbf{Verifier Backbone:} We utilize the lightweight \textit{Qwen-2.5-1.5B-Instruct} as a Causal Feature Extractor, operating in \texttt{bfloat16} precision with Flash Attention 2.
    
    \item \textbf{LoRA Configuration:} To ensure parameter efficiency, we apply Low-Rank Adaptation (LoRA) \cite{hu2021lora} with $r=16, \alpha=32$, and dropout $0.1$. Adaptation targets all linear projections (\texttt{q}, \texttt{k}, \texttt{v}, \texttt{o}, \texttt{gate}, \texttt{up}, \texttt{down}), resulting in updating only $\sim$0.3\% of total parameters.
    
    \item \textbf{Value Head \& Aggregation:} Instead of a linear probe, we use a \textbf{2-layer MLP} (Hidden $\to$ GELU $\to$ Dropout $\to$ Scalar). Crucially, we employ \textbf{Last-Token Aggregation}: the flow value $F(s_t)$ is predicted from the hidden state of the last token in step $s_t$, enforcing strict causality.
\end{itemize}

\paragraph{Training Data: Sparse Deep Fork.}
All verifiers are trained on the same dataset seeded from the \textbf{MATH training split} \cite{hendrycks2021measuring}. 
For each problem, we sample $N=16$ trajectories via the Proposer ($T=0.6, p=0.95$) and apply stochastic branching ($p=0.5$) to generate local forks. 
Terminal states are verified against ground truth solutions and labeled with our soft reward scheme ($R \in \{0.1, 10\}$).

\paragraph{Baselines.}
We compare ARTS against representative methods:
\begin{enumerate}
    \item \textbf{SimpleRL-Zoo (GRPO)} \cite{zeng2025simplerl}: 
    An RLVR baseline initialized from \texttt{Qwen-2.5-7B} and fully fine-tuned using Group Relative Policy Optimization on MATH and GSM8K.
    
    \item \textbf{Pointwise PRM} \cite{lightman2023letsverify}: 
    A verifier sharing the ARTS architecture (1.5B + LoRA), trained with \textbf{Mean Squared Error (MSE)} regression on step-level rewards (Math-Shepherd style).
    
    \item \textbf{Pairwise RM} \cite{ouyang2022training}: 
    A verifier sharing the ARTS architecture, trained with the \textbf{Bradley-Terry ranking loss} on the branching nodes of the Sparse Deep Fork graph.
\end{enumerate}

\paragraph{Implementation Details.}
We perform \textbf{Best-of-$N$} sampling ($N \in \{2, 4, 8, 16\}$) for all verifier-guided methods.
We generate $N$ independent chains using the frozen Proposer. 
Trajectory scoring differs by objective: for Pointwise PRM and Pairwise RM, we aggregate step scores using the \textbf{minimum} value (worst-case verification); for ARTS, we use the \textbf{terminal} flow value $F(s_T)$ consistent with flow conservation.

\subsection{Efficiency and Effectiveness of Decoupled Search}

Table~\ref{tab:main_results} presents the comparative analysis between coupled policy optimization (GRPO) and decoupled search (ARTS) on the MATH-500 benchmark. 
The results reveal a distinct trade-off between \textit{internalization} (baking capability into weights) and \textit{deliberation} (spending compute at inference).

\paragraph{Policy Optimization via RL.}
The \textbf{SimpleRL-Zoo (GRPO)} baseline demonstrates the efficacy of reinforcement learning in aligning the policy distribution. 
Starting from the base model's Pass@1 of 57.7\%, GRPO significantly lifts the zero-shot generation performance to \textbf{74.7\%}. 
This confirms that coupled optimization effectively concentrates probability mass on correct reasoning traces.

\paragraph{Guided Search via Verification.}
In contrast, \textbf{ARTS} operates with the frozen base model, which retains the original entropy but lacks zero-shot precision. 
However, by applying Best-of-$N$ search ($N=16$) guided by our 1.5B verifier, ARTS improves the performance from 57.7\% to \textbf{74.6\%}.
Notably, this result is statistically indistinguishable from the fully fine-tuned GRPO baseline (74.7\%).
This finding suggests that \textbf{inference-time search} can serve as a viable alternative to expensive full-parameter fine-tuning, achieving comparable downstream performance by unlocking the latent potential of the pre-trained model.

\paragraph{Gap Closing Efficiency.}
To contextualize the verifier's performance, we analyze the \textit{Gap Closed} metric.
The theoretical upper bound (Oracle@16) for the base model is 88.6\%.
While standard Pointwise PRM and Pairwise RM baselines close 52.1\% and 31.3\% of this gap respectively, ARTS closes \textbf{54.7\%}.
This indicates that the Flow Matching objective is highly effective at identifying correct reasoning paths within the noisy search space, offering a robust signal for guiding search processes without requiring policy updates.

\subsection{Robustness on High-Entropy Tasks}
\begin{table}[htbp]
\centering
\caption{\textbf{Domain-Specific Analysis (BoN@16).} 
Pointwise PRM (MSE) shows strength in standard procedural tasks (Algebra). 
However, ARTS (Flow Matching) demonstrates higher robustness in high-entropy domains (Counting \& Probability) and on the hardest problem subset (Level 5).}
\label{tab:domain_breakdown}

\small   

\begin{tabular}{lccc}
\toprule
\textbf{Category} & \textbf{Pointwise PRM} & \textbf{ARTS (RFM)} & \textbf{$\Delta$} \\
\midrule
Algebra & \textbf{89.5\%} & 88.7\% & -0.8\% \\
Number Theory & \textbf{87.1\%} & 80.7\% & -6.4\% \\
Geometry & 58.5\% & \textbf{61.0\%} & +2.5\% \\
Counting \& Prob. & 73.7\% & \textbf{79.0\%} & \textbf{+5.3\%} \\
Level 5 (Hardest) & 47.8\% & \textbf{50.0\%} & +2.2\% \\
\bottomrule
\end{tabular}
\end{table}

While the aggregate performance of ARTS and the Pointwise PRM baseline is comparable, breaking down the results by domain reveals distinct inductive biases in their training objectives.
Table~\ref{tab:domain_breakdown} details the performance across mathematical sub-domains.

\paragraph{Procedural vs. Combinatorial Reasoning.}
We observe a performance trade-off based on the nature of the reasoning task:
\begin{itemize}
    \item \textbf{Linear Reasoning:} In domains like \textit{Algebra} and \textit{Number Theory}, the reasoning process is often linear and procedural. Here, the Pointwise PRM performs slightly better. This suggests that when independent steps strongly correlate with the final outcome, dense step-level supervision is highly effective.
    
    \item \textbf{Branching Reasoning:} In contrast, \textit{Counting \& Probability} problems often involve complex decision trees where a local error can propagate invisibly. In this high-entropy domain, ARTS achieves a notable gain of \textbf{+5.3\%}. 
    This indicates that the \textbf{Flow Matching} objective, by enforcing global consistency across the search tree ($\log F(s) \approx \log \sum F(s')$), may be structurally better suited for navigating combinatorial spaces where the value of a step depends heavily on the aggregation of its future paths.
\end{itemize}

\paragraph{Robustness on Hard Problems.}
On the \textbf{Level 5} subset, which represents the long-tail of problem difficulty, ARTS maintains a \textbf{+2.2\%} advantage over the PRM baseline. 
This aligns with our theoretical motivation: hard problems typically require exploring rare reasoning paths. 
By modeling the probability mass of the entire subtree rather than just the immediate step quality, the flow-based verifier appears to offer more robust guidance for recovering these elusive solutions.

\subsection{Analysis on Long-Tail Reasoning Distributions}
\label{sec:resurrection}

Standard metrics often obscure model performance on the "long tail" of reasoning distributions. 
To rigorously test the limits of ARTS, we construct two stress-test subsets where correct reasoning paths are statistically invisible.

\paragraph{Scenario A: The "Needle in a Haystack".}
We select problems where the Base Proposer generates \textbf{at most 2} correct solutions out of 16 samples (Probability $\le 12.5\%$), simulating the "Rare Traces" defined in Section~\ref{sec:theoretical_analysis}.

\paragraph{Scenario B: The "Extinction Zone".}
We isolate problems where the \textbf{GRPO Baseline} yields 0\% accuracy, representing reasoning patterns that were effectively "extinguished" by the mode-seeking optimization.

\begin{figure}[t]
    \label{}
    \centering
    \includegraphics[width=\linewidth]{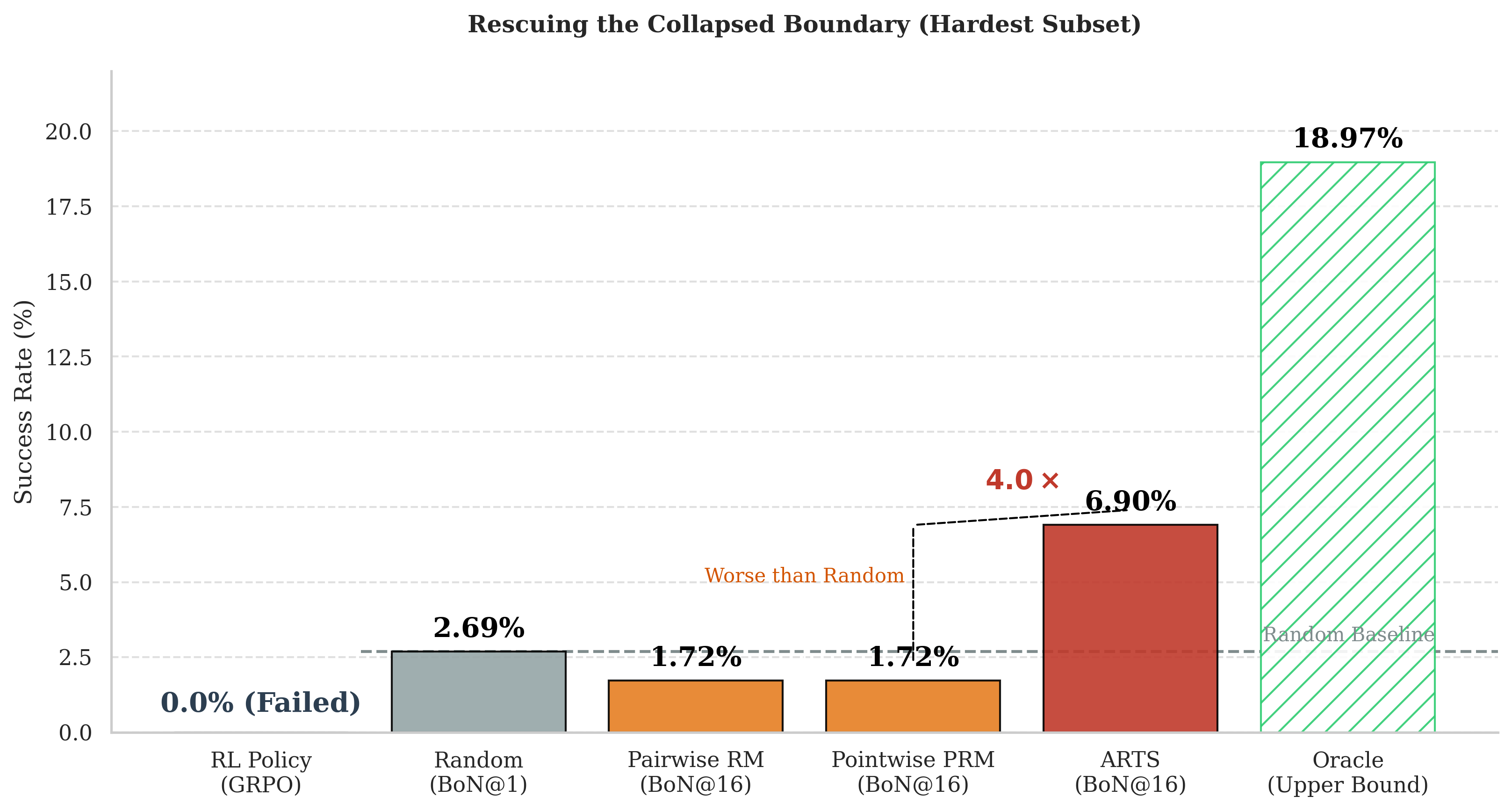} 
    \caption{
    \textbf{Performance on the ``Extinction Set''.} 
    We compare recovery rates on the hardest subset where the RL policy (GRPO) fails completely (0\%).
    While standard verifiers (PRM/RM) suppress rare traces below the random sampling baseline (Dashed Line), ARTS uniquely achieves positive recovery.
    This confirms that Flow Matching preserves valid reasoning signals in the long tail where standard discriminative objectives collapse.
    }
    \label{fig:rescuing_boundary}
\end{figure}

\begin{table}[t]
\centering
\caption{\textbf{Stress Test on Long-Tail Reasoning.} 
\textbf{Panel A:} Problems with scarce solutions ($\le 2/16$). 
\textbf{Panel B:} Problems where RL (GRPO) fails completely. 
ARTS demonstrates superior sensitivity in both regimes compared to random sampling and standard verifiers.}
\label{tab:stress_test}

\footnotesize   

\setlength{\tabcolsep}{4pt}  

\begin{tabular}{lccc}
\toprule
\multicolumn{4}{c}{\textbf{Panel A: Rare Solutions (Base Prob $\le$ 12.5\%)}} \\
\midrule
\textbf{Category} & \textbf{Random} & \textbf{ARTS (BoN@16)} & \textbf{Gain} \\
\midrule
Global Average & 8.5\% & \textbf{27.0\%} & $\mathbf{3.2\times}$ \\
\textit{Counting \& Prob.} & 7.5\% & \textbf{60.0\%} & $\mathbf{8.0\times}$ \\
\midrule
\multicolumn{4}{c}{\textbf{Panel B: Extinguished Solutions (GRPO Pass@16 = 0\%)}} \\
\midrule
\textbf{Metric} & \textbf{Random} & \textbf{PRM / RM} & \textbf{ARTS (Ours)} \\
\midrule
BoN@16 Accuracy & 2.7\% & 1.7\% (Failure) & \textbf{6.9\%} \\
Gap Closed & 0.0\% & -6.0\% (Suppression) & \textbf{+25.8\%} \\
\bottomrule
\end{tabular}
\end{table}

\paragraph{Analysis: The Flow Advantage.}
The results in Figure~\ref{fig:rescuing_boundary} and Table~\ref{tab:stress_test} reveal the unique robustness of the Flow Matching objective:

\begin{itemize}
    \item \textbf{Amplification of Rare Signals (Panel A):} 
    Even when the signal-to-noise ratio is extremely low ($1:15$), ARTS identifies correct paths with high precision. In combinatorial domains like \textit{Counting \& Probability}, it amplifies the success rate by an order of magnitude ($8\times$), demonstrating its capability to navigate sparse search spaces.
    
    \item \textbf{Resilience to Suppression (Panel B):} 
    Crucially, on the "Extinction Set" where RL fails, standard verifiers (both Pairwise RM and Pointwise PRM) collapse, achieving only $\sim 1.7\%$ accuracy. This is notably \textbf{worse than random sampling} ($2.7\%$), suggesting that discriminative objectives trained on imbalanced data actively penalize valid but rare logic as "out-of-distribution" noise.
    In contrast, ARTS is the \textbf{only method} to achieve positive recovery ($6.9\%$), outperforming standard verifiers by \textbf{$4.0\times$}. 
    By modeling global probability flow rather than local step preference, ARTS preserves the signal of rare correct paths, effectively bridging the gap (+25.8\%) that coupled optimization leaves behind.
\end{itemize}

\section{Discussion and Conclusion}
\label{sec:discussion}

In this work, we investigated the optimization dynamics of Reinforcement Learning with Verifiable Rewards (RLVR). 
Our theoretical analysis identifies a structural vulnerability in standard coupled optimization: the \textit{extinction} of valid but rare reasoning traces caused by the mechanistic interplay between low-temperature sampling and softmax normalization.
To mitigate this \textbf{"High-Pass Filter"} effect, we proposed \textbf{Amortized Reasoning Tree Search (ARTS)}, a framework that fundamentally decouples the search space generator (Proposer) from the value estimator (Verifier) using a flow-matching objective.

\subsection{The Trade-off: Internalization vs. Deliberation}

Our findings illuminate a fundamental trade-off in the design of reasoning systems, balancing \textit{parameter updates} against \textit{inference compute}:

\begin{itemize}
    \item \textbf{Internalization (System 1):} Coupled methods like GRPO aim to \textbf{internalize} successful reasoning patterns directly into the policy's weights. This "compiles" search into intuition, yielding high zero-shot efficiency. However, as shown in our extinction analysis, this process can aggressively prune the long-tail diversity required for hard problems.
    
    \item \textbf{Deliberation (System 2):} In contrast, ARTS prioritizes \textbf{deliberation}. By maintaining a high-entropy generator and relying on a flow-calibrated verifier to guide inference-time search, it preserves the model's latent boundaries. 
    While this shifts the computational burden from training to inference, recent scaling laws for inference compute~\cite{brown2024large} suggest that this may be a more scalable path for solving complex reasoning tasks where a single "best" path is insufficient.
\end{itemize}

We view these approaches not as mutually exclusive, but as complementary strategies on the \textbf{Pareto frontier} of intelligence: RLVR for efficiency in common scenarios, and Flow-Guided Search for robustness in the long tail.

\subsection{Limitations}

We acknowledge several limitations in our current study that merit further investigation:
\begin{itemize}
    \item \textbf{Bounded by the Proposer:} ARTS operates as a selector, not a creator. It cannot generate solutions that are structurally absent from the base model's support. If the frozen proposer fails to cover the correct reasoning path within the sampling budget ($N$), the verifier is powerless to recover it.
    \item \textbf{Inference Latency:} Unlike end-to-end RL policies which can be deployed greedily, ARTS relies on Best-of-$N$ sampling or tree search. This incurs higher latency and cost, making it less suitable for latency-sensitive applications.
    \item \textbf{Heuristic Reward Shaping:} Our Flow Matching objective currently relies on a manually designed soft reward scheme ($10$ vs $0.1$) to ensure numerical stability. While empirically effective, deriving these values from a principled probabilistic framework remains an open question.
\end{itemize}

\subsection{Future Directions}

The efficacy of ARTS suggests several promising avenues for future research:
\begin{itemize}
    \item \textbf{Iterative ARTS (Expert Iteration):} The high-quality trajectories filtered by ARTS could serve as synthetic data to fine-tune the base model, akin to \textbf{Expert Iteration (ExIt)}~\cite{anthony2017thinking}. This closed-loop approach could progressively improve the proposer's efficiency without suffering from the immediate mode collapse observed in pure RL.
    \item \textbf{Beyond Mathematics:} While we focused on math, the Flow Matching objective is theoretically applicable to any domain with sparse, tree-structured reasoning, such as code generation and agentic planning.
    \item \textbf{Unified "Anytime" Architectures:} An intriguing direction is to unify the policy and verifier into a single model that can dynamically switch between fast generation (System 1) and slow search (System 2) based on the estimated difficulty of the problem.
\end{itemize}

In conclusion, we encourage the community to re-examine the role of search in the era of large language models. 
By treating reasoning not just as token prediction but as a \textbf{flow matching problem over a sparse reasoning space}, we can unlock latent capabilities that are currently hidden behind the visibility barrier of standard optimization.

\bibliography{iclr2025_conference}
\bibliographystyle{iclr2025_conference}

\appendix
\section{Proposition of Theorem \ref{prop:decay_law}}
\label{app:proof_thm1}

In this section, we provide the formal derivation of the Probability Decay Law for unsampled reasoning traces.

\subsection{Setup and Definitions}

Let the policy at iteration $k$ be parameterized by a scoring function (logits) $f_k: \mathcal{T} \to \mathbb{R}$, such that the probability of a trace $\tau$ is given by the Softmax distribution:
\begin{equation}
    \pi_k(\tau) = \frac{\exp(f_k(\tau))}{Z_k}, \quad \text{where} \quad Z_k = \sum_{\tau' \in \mathcal{T}} \exp(f_k(\tau')).
\end{equation}

In standard RLVR (e.g., GRPO/PPO), the optimization objective is approximated using a finite batch of samples $\mathcal{S}_k$. The gradient update is sparse: it only affects the logits of traces present in $\mathcal{S}_k$. 
The update rule for the logits can be written as:
\begin{equation}
    f_{k+1}(\tau) = 
    \begin{cases} 
    f_k(\tau) + \frac{R(\tau)}{\beta} & \text{if } \tau \in \mathcal{S}_k \quad (\text{Reinforced}) \\
    f_k(\tau) & \text{if } \tau \notin \mathcal{S}_k \quad (\text{Frozen})
    \end{cases}
\end{equation}
where $R(\tau)$ is the reward and $\beta$ is the KL coefficient (temperature of the update).

\subsection{Derivation of Partition Function Shift}

We first analyze how the normalization constant (partition function) $Z_{k+1}$ changes relative to $Z_k$.
By definition:
\begin{equation}
    Z_{k+1} = \sum_{\tau \in \mathcal{T}} \exp(f_{k+1}(\tau)).
\end{equation}
We split the summation over the reasoning space $\mathcal{T}$ into two disjoint sets: the sampled set $\mathcal{S}_k$ and the unsampled set $\mathcal{T} \setminus \mathcal{S}_k$.
\begin{align}
    Z_{k+1} &= \sum_{\tau \in \mathcal{S}_k} \exp\left(f_k(\tau) + \frac{R(\tau)}{\beta}\right) + \sum_{\tau \notin \mathcal{S}_k} \exp(f_k(\tau)) \\
    &= \sum_{\tau \in \mathcal{S}_k} \exp(f_k(\tau)) \cdot \exp\left(\frac{R(\tau)}{\beta}\right) + \sum_{\tau \notin \mathcal{S}_k} \exp(f_k(\tau)).
\end{align}
Recall that for the original distribution, $Z_k = \sum_{\tau \in \mathcal{S}_k} \exp(f_k(\tau)) + \sum_{\tau \notin \mathcal{S}_k} \exp(f_k(\tau))$.
Thus, the sum over unsampled traces can be expressed as:
\begin{equation}
    \sum_{\tau \notin \mathcal{S}_k} \exp(f_k(\tau)) = Z_k - \sum_{\tau \in \mathcal{S}_k} \exp(f_k(\tau)).
\end{equation}
Substituting this back into the expression for $Z_{k+1}$:
\begin{align}
    Z_{k+1} &= \sum_{\tau \in \mathcal{S}_k} \exp(f_k(\tau)) \cdot \exp\left(\frac{R(\tau)}{\beta}\right) + Z_k - \sum_{\tau \in \mathcal{S}_k} \exp(f_k(\tau)) \\
    &= Z_k + \sum_{\tau \in \mathcal{S}_k} \exp(f_k(\tau)) \left[ \exp\left(\frac{R(\tau)}{\beta}\right) - 1 \right].
\end{align}
Dividing both sides by $Z_k$, and noting that $\frac{\exp(f_k(\tau))}{Z_k} = \pi_k(\tau)$, we obtain the ratio of partition functions:
\begin{equation}
\label{eq:partition_ratio}
    \frac{Z_{k+1}}{Z_k} = 1 + \sum_{\tau \in \mathcal{S}_k} \pi_k(\tau) \left[ \exp\left(\frac{R(\tau)}{\beta}\right) - 1 \right].
\end{equation}
We define the \textbf{Mass Amplification Rate} $\alpha_k$ as:
\begin{equation}
    \alpha_k \triangleq \sum_{\tau \in \mathcal{S}_k} \pi_k(\tau) \left[ \exp\left(\frac{R(\tau)}{\beta}\right) - 1 \right].
\end{equation}
Thus, $Z_{k+1} = Z_k (1 + \alpha_k)$.

\subsection{Probability Update for Unsampled Traces}

Now consider a specific reasoning trace $\tau_{\text{rare}}$ that was \textbf{not} sampled ($\tau_{\text{rare}} \notin \mathcal{S}_k$). Its logit remains unchanged: $f_{k+1}(\tau_{\text{rare}}) = f_k(\tau_{\text{rare}})$.
The probability at step $k+1$ is:
\begin{align}
    \pi_{k+1}(\tau_{\text{rare}}) &= \frac{\exp(f_{k+1}(\tau_{\text{rare}}))}{Z_{k+1}} \\
    &= \frac{\exp(f_k(\tau_{\text{rare}}))}{Z_k (1 + \alpha_k)} \\
    &= \frac{\exp(f_k(\tau_{\text{rare}}))}{Z_k} \cdot \frac{1}{1 + \alpha_k} \\
    &= \pi_k(\tau_{\text{rare}}) \cdot \frac{1}{1 + \alpha_k}.
\end{align}
Assuming positive rewards $R(\tau) > 0$ (which is true for correct reasoning traces), we have $\exp(R/\beta) > 1$, implying $\alpha_k > 0$.
Therefore, the decay factor $\gamma_k = \frac{1}{1+\alpha_k} < 1$.
\qed

\section{Theoretical Analysis of Flow-Guided Search}
\label{app:inference_optimality}

In this section, we provide the theoretical justification for using the learned flow function $F_\theta$ to guide the Beam Search process during inference. We show that minimizing the RFM objective implies that $F_\theta$ recovers the correct ranking of candidates within the support of the Proposer.

\subsection{Tree-Conditional Optimality of RFM}

Recall the Reasoning Flow Matching (RFM) objective on a fixed training graph (tree) $\mathcal{G}$:
\begin{equation}
\mathcal{L}_{\mathrm{RFM}}(\theta) = \sum_{s \in \mathcal{G}} \left( \log F_{\theta}(s) - \log \sum_{s' \in \mathcal{C}_{obs}(s)} F_{\theta}(s') \right)^2.
\end{equation}

Assuming the model has sufficient capacity (realizability), the global minimum $\mathcal{L}_{\mathrm{RFM}}=0$ implies the flow conservation holds exactly on the graph:
\begin{equation}
\label{eq:flow_conservation}
F_{\theta^*}(s) = \sum_{s' \in \mathcal{C}_{obs}(s)} F_{\theta^*}(s'), \quad \forall s \in \mathcal{G}.
\end{equation}
By unrolling this recursion to the terminal states $\mathcal{S}_T$, we obtain:
\begin{equation}
F_{\theta^*}(s) = \sum_{\tau \in \mathcal{T}(s)} R(\tau),
\end{equation}
where $\mathcal{T}(s)$ is the set of terminal trajectories in the graph reachable from $s$.

This result establishes \textbf{Tree-Conditional Optimality}: for a fixed search tree generated by the Proposer (whether during training or inference), the optimal flow value $F_{\theta^*}(s)$ represents the \textbf{total unnormalized probability mass of correct solutions} residing in the subtree rooted at $s$.

\subsection{Optimality of Beam Selection}

Consider the inference step at state $s_t$, where the Proposer $\pi_0$ generates a candidate set $\mathcal{A}_K = \{a^{(1)}, \dots, a^{(K)}\}$. This induces $K$ child states $\{s^{(1)}_{t+1}, \dots, s^{(K)}_{t+1}\}$.
Our Beam Search strategy selects the top-$B$ candidates maximizing $F_\theta(s^{(k)}_{t+1})$.

Under the Soft Reward scheme ($R=10$ for correct, $R=0.1$ for incorrect):
\begin{itemize}
    \item If a candidate $s^{(i)}$ leads to at least one discoverable correct path, $F(s^{(i)}) \ge 10$.
    \item If a candidate $s^{(j)}$ leads only to incorrect paths (within the Proposer's support), $F(s^{(j)}) = \sum 0.1 \ll 10$.
\end{itemize}
Therefore, ranking by $F_\theta$ is equivalent to maximizing the probability of keeping a valid solution in the beam. Specifically,
\begin{equation}
\arg\max_{a \in \mathcal{A}_K} F_\theta(s_{next}) \iff \arg\max_{a \in \mathcal{A}_K} \mathbb{E}_{\pi_0}[\text{Future Correctness} \mid s, a].
\end{equation}
This confirms that ARTS inference structurally approximates the optimal search policy bounded by the Proposer's exploration capability.

\section{Derivation of Visibility Amplification}
\label{app:disc_amp}

This appendix provides the derivation for the ``Visibility Booster'' factor $K^H$ referenced in Section~\ref{sec:ARTS}, demonstrating how beam search with proposals counteracts the base model's likelihood decay.

\subsection{Proposer Inclusion Probability}
Let $\pi_0(a \mid s)$ be the base model's probability of selecting action $a$. During ARTS inference (and Sparse Deep Fork training), we sample a candidate set $\mathcal{A}_K$ of $K$ actions i.i.d. from $\pi_0$.
The probability that a specific target action $a^*$ is included in $\mathcal{A}_K$ is:
\begin{equation}
    P(a^* \in \mathcal{A}_K) = 1 - (1 - \pi_0(a^* \mid s))^K.
\end{equation}

\subsection{Rare-Action Regime}
In complex reasoning tasks, correct actions often lie in the long tail, satisfying $\pi_0(a^* \mid s) \ll 1/K$.
Applying the first-order Taylor expansion $(1-x)^K \approx 1 - Kx$, we approximate the inclusion probability as:
\begin{equation}
    P(a^* \in \mathcal{A}_K) \approx K \cdot \pi_0(a^* \mid s).
\end{equation}
This indicates that the ``effective prior'' of the action is linearly amplified by the proposal width $K$.

\subsection{Trajectory Discovery}
A complete reasoning trajectory $\tau = (a_1, \dots, a_H)$ is discovered if and only if each action $a_t$ is included in the respective proposal set.
Assuming conditional independence between steps, the total discovery probability scales as:
\begin{equation}
    P_{\mathrm{disc}}(\tau) \approx \prod_{t=1}^H \left( K \cdot \pi_0(a_t \mid s_{t-1}) \right) = K^H \cdot \pi_0(\tau).
\end{equation}
For a reasoning depth $H=10$ and $K=4$, the factor $K^H \approx 10^6$. 
This amplification explains why ARTS can recover and verify reasoning paths that are statistically impossible to sample using standard temperature-based decoding (where probability would be just $\pi_0(\tau)$).

\section{Detailed Experimental Results}
\label{app:detailed_results}

In this section, we provide the comprehensive performance breakdown of ARTS and baselines across different problem difficulties and mathematical sub-domains. 
All results reported here are \textbf{Best-of-N (BoN@16)} accuracies using the frozen \texttt{Qwen-2.5-7B} Proposer and \texttt{Qwen-2.5-1.5B} Verifiers.

\subsection{Performance by Difficulty Level}

Table~\ref{tab:full_level_breakdown} details the performance scaling across the five difficulty levels defined in the MATH benchmark.
While standard Pairwise Ranking (RM) and Pointwise Regression (PRM) perform well on easier problems (Levels 1--3), \textbf{ARTS} demonstrates superior scaling on harder problems.
Notably, on \textbf{Level 5} (the hardest subset), ARTS achieves \textbf{50.0\%} accuracy, surpassing the PRM baseline (47.8\%) and the RM baseline (46.3\%). This confirms that the Flow Matching objective is particularly effective for recovering rare reasoning traces that are statistically unlikely in the base distribution.

\begin{table*}[t]
\centering
\caption{\textbf{Detailed Breakdown by Difficulty Level (BoN@16).} 
Base represents random sampling ($N=1$). Verifiers re-rank $N=16$ samples. 
ARTS shows stronger performance on higher-difficulty problems (Levels 4--5).}
\label{tab:full_level_breakdown}

\small
\setlength{\tabcolsep}{5pt}

\begin{tabular}{lccccc}
\toprule
\textbf{Difficulty} & \textbf{Base} & \textbf{Pairwise RM} & \textbf{Pointwise PRM} & \textbf{ARTS (RFM)} & \textbf{Best} \\
\midrule
Level 1 (Easiest) & 81.7\% & \textbf{93.0\%} & \textbf{93.0\%} & 88.4\% & PRM/RM \\
Level 2 & 71.1\% & 82.2\% & \textbf{90.0\%} & 87.8\% & PRM \\
Level 3 & 72.0\% & 75.2\% & \textbf{86.7\%} & 85.7\% & PRM \\
Level 4 & 55.3\% & 64.1\% & 72.7\% & \textbf{77.3\%} & \textbf{ARTS} \\
Level 5 (Hardest) & 32.2\% & 46.3\% & 47.8\% & \textbf{50.0\%} & \textbf{ARTS} \\
\midrule
\textbf{Global Avg} & 57.7\% & 67.4\% & 73.8\% & \textbf{74.6\%} & \textbf{ARTS} \\
\bottomrule
\end{tabular}
\end{table*}

\subsection{Performance by Problem Category}

Table~\ref{tab:full_category_breakdown} breaks down performance by mathematical subject. 
This comparison reveals the distinct inductive biases of the training objectives:
\begin{itemize}
    \item \textbf{Pointwise PRM (MSE)} excels in \textbf{Algebra} (+0.8\% over ARTS) and \textbf{Number Theory} (+6.4\% over ARTS). These domains typically involve linear, procedural derivation where independent step-level scoring is sufficient.
    \item \textbf{ARTS (RFM)} dominates in \textbf{Counting \& Probability} (+5.3\% over PRM), \textbf{Geometry} (+2.5\% over PRM), and \textbf{Precalculus} (+3.6\% over PRM). These domains involve high combinatorial complexity, branching logic, or spatial reasoning, where the global consistency enforced by Flow Matching provides a stronger signal than local value estimation.
\end{itemize}

\begin{table*}[t]
\centering
\caption{\textbf{Detailed Breakdown by Problem Category (BoN@16).} 
Comparison of different verifier objectives across mathematical sub-domains. 
ARTS shows consistent gains in high-entropy domains (Counting, Geometry, Precalculus), 
while Pointwise PRM performs strongly in procedural domains (Algebra, Number Theory).}
\label{tab:full_category_breakdown}

\small
\setlength{\tabcolsep}{5pt}

\begin{tabular}{lccccc}
\toprule
\textbf{Category} & \textbf{Base} & \textbf{Pairwise RM} & \textbf{Pointwise PRM} & \textbf{ARTS (RFM)} & \textbf{Gap} \\
\midrule
Algebra & 72.9\% & 81.5\% & \textbf{89.5\%} & 88.7\% & -0.8\% \\
Number Theory & 60.2\% & 64.5\% & \textbf{87.1\%} & 80.7\% & -6.4\% \\
Prealgebra & 69.3\% & 76.8\% & 81.7\% & \textbf{81.7\%} & 0.0\% \\
Intermediate Algebra & 42.9\% & 56.7\% & 53.6\% & \textbf{57.7\%} & \textbf{+4.1\%} \\
\midrule
Geometry & 49.5\% & 56.1\% & 58.5\% & \textbf{61.0\%} & \textbf{+2.5\%} \\
Precalculus & 40.9\% & 53.6\% & 58.9\% & \textbf{62.5\%} & \textbf{+3.6\%} \\
Counting \& Prob. & 50.8\% & 65.8\% & 73.7\% & \textbf{79.0\%} & \textbf{+5.3\%} \\
\bottomrule
\end{tabular}
\end{table*}

\section{Implementation Details}
\label{app:implementation_details}

\subsection{Prompt Templates}
To ensure reproducibility, we provide the exact prompt templates used for both the Proposer (generation) and the Verifier (flow estimation).

\paragraph{Proposer Prompts.}
For the generative policy (Qwen-2.5-7B), we utilize the standard ChatML format. 
Although our Proposer utilizes a base model checkpoint rather than an instruction-tuned one, we adopt this specific template following \textbf{SimpleRL-Zoo}~\cite{zeng2025simplerl}, as their empirical results confirm that this structure effectively elicits mathematical Chain-of-Thought (CoT) from Qwen base models and facilitates robust answer extraction via the boxed format.
The template is structured as follows:

\begin{lstlisting}[
    basicstyle=\ttfamily\footnotesize, 
    breaklines=true, 
    frame=single, 
    columns=fullflexible,
    keepspaces=true
]
<|im_start|>system
You are a helpful assistant.<|im_end|>
<|im_start|>user
{Question}
Please reason step by step, and put your final answer within \boxed{}.<|im_end|>
<|im_start|>assistant
\end{lstlisting}

\paragraph{Verifier Prompts.}
For the ARTS verifier, we employ a lightweight instruction-tuned model (Qwen-2.5-1.5B-Instruct). We adopt a simplified prompt structure to reduce context overhead, where reasoning steps are concatenated with newline characters:

\begin{lstlisting}[
    basicstyle=\ttfamily\footnotesize, 
    breaklines=true, 
    frame=single, 
    columns=fullflexible,
    keepspaces=true
]
Question: {Question}
Answer: Let's think step by step.
{Step 1}
{Step 2}
...
{Current Step}
\end{lstlisting}

\subsection{Generation Hyperparameters}
We generate the \textbf{Sparse Deep Fork} training data and perform inference using the \textbf{Qwen-2.5-7B} base model.
For sampling, we set the temperature to $T=0.6$ and nucleus sampling to top-$p=0.95$.
This specific temperature setting is selected following recent empirical findings in RLVR~\cite{zeng2025simplerl, yue2025does}, which demonstrate that $T=0.6$ strikes an optimal trade-off for Qwen-2.5 models: it maintains high generation accuracy while providing sufficient diversity to cover valid reasoning paths in the long tail.

\end{document}